\documentclass[letterpaper, 10 pt, conference]{ieeeconf}

\IEEEoverridecommandlockouts 

\overrideIEEEmargins

\usepackage{amsfonts}
\usepackage{graphics}
\usepackage{epsfig}
\usepackage{amsmath}
\usepackage{amssymb}
\usepackage{tikz}
\usepackage{tikz-cd}
\usetikzlibrary{patterns,quotes,angles}

\newtheorem{definition}{Definition}

\newtheorem{remark}{Remark}
\newtheorem{lemma}{Lemma}
\newtheorem{theorem}{Theorem}
\newtheorem{proposition}{Proposition}
\let\myemph\emph
\renewcommand{\emph}[1]{{\bfseries\itshape{#1}}}


\newcommand{\X}{\mathcal{X}}

\newcommand*{\dd}{\mathrm{d}}

\newcommand*{\contr}[1]{\iota_{#1}}

\newcommand{\norm}[1]{\left\lvert\left\lvert #1 \right\rvert\right\rvert}

\makeatletter
\renewcommand*{\@opargbegintheorem}[3]{\trivlist
  \item[\hskip \labelsep{\bfseries #1\ #2}] \textbf{(#3)}\ \itshape}
\makeatother

\title{\LARGE \bf
The interplay between symmetries and impact effects\\ on hybrid mechanical systems*
}

\author{William Clark$^{1}$, Leonardo Colombo$^{2}$, and Anthony Bloch$^{3}$
\thanks{*W. Clark was funded by AFOSR grant FA9550-23-1-0400. L. Colombo acknowledges financial support from Grant PID2022-137909NB-C21 funded by MCIN/AEI/ 10.13039/501100011033. A.Bloch was partially supported by NSF grant  DMS-2103026, and AFOSR grants FA
9550-22-1-0215 and FA 9550-23-1-0400.}
\thanks{$^{1}$William Clark is with the Department of Mathematics,
        Ohio University, Athens, OH, 45701, USA
        {\tt\small clarkw3@ohio.edu}}%
\thanks{$^{2}$L. Colombo is with Centre for Automation and Robotics (CSIC-UPM), Ctra. M300 Campo Real, Km 0,200, Arganda del Rey - 28500 Madrid, Spain. {\tt\small leonardo.colombo@csic.es}}
\thanks{$^{3}$A. Bloch is with Department of Mathematics, University of Michigan, Ann Arbor, MI 48109, USA. {\tt\small abloch@umich.edu}}
}

\begin{document}

\maketitle
\thispagestyle{empty}
\pagestyle{empty}

\begin{abstract}

Hybrid systems are dynamical systems with continuous-time and discrete-time components in their dynamics. When hybrid systems are defined on a principal bundle we are able to define two classes of impacts for the discrete-time transition of the dynamics:  interior impacts and exterior impacts. In this paper we define hybrid systems on principal bundles, study the underlying geometry on the switching surface where impacts occur and we find conditions for which both exterior and interior impacts are preserved by the mechanical connection induced in the principal bundle. 

\end{abstract}

\section{Intoduction}
Hybrid systems are dynamical systems with continuous-time and discrete-time components in their dynamics. These dynamical systems  are capable of modeling various physical systems, such as multiple UAV systems \cite{lee_geometric_2013}, bipedal robots \cite{westervelt_feedback_2018} and embedded computer systems \cite{goebel_rafal_hybrid_2012}, \cite{schaft_introduction_2000}, among others. 

Simple hybrid systems are a type of hybrid systems introduced in \cite{johnson_simple_1994} denoted as such because of their simple nature. A simple hybrid system is characterized by a tuple $\mathcal{H}=(\X,\mathcal{S},X,\Delta)$ where $\X$ is a smooth manifold, $X$ is a smooth vector field on $\X$, $\mathcal{S}$ is an embedded submanifold of $\X$ with co-dimension $1$ called the switching surface (or the guard), and $\Delta:\mathcal{S}\to \X$ is a smooth embedding called the impact map (or the reset map). This type of hybrid system has been mainly employed for the understanding of walking gaits in bipeds and insects \cite{westervelt_feedback_2018}, \cite{ames_geometric_2007}, \cite{holmes_dynamics_2006}. In the situation where the vector field $X$ is associated with a mechanical system (Lagrangian or Hamiltonian), alternative approaches for unilateral constraints and hybrid systems with symmetries have been considered in \cite{bloch2017quasivelocities}, \cite{colombo2020symmetries}, \cite{cortes_hamiltonian_2006}, \cite{cortes_mechanical_2001}, \cite{ibort_geometric_1998}, \cite{ibort_mechanical_1997}.

We consider here the role of connections in understanding hybird systems on manifolds. Roughly speaking, a connection tells us how a quantity
associated with a manifold changes as we move from one
point to another - it “connects” neighboring spaces. In terms
of fiber bundles, a connection tells us how movement in the
total space induces change along the fiber. An important connection for analyzing the dynamics and control of mechanical systems is the mechanical connection which is defined in terms of the momentum map associated with a Lie group of symmetries. 

The following motivating example shows the interplay between symmetries, mechanical connection on principal bundles and the role of impacts by studying the underlying geometry of the switching surface and preservation properties of the impact map.

\subsection{Motivating example: pendulum on the cart}

We begin with a case study: the pendulum on a cart. Details on the mathematical background are given 
below. The configuration space for this system  (see Fig. \ref{fig:pendulum_cart}) is $Q = \mathbb{S}^1\times\mathbb{R}$ with Lagrangian
\begin{align*}
    L(\theta,x,\dot{\theta},\dot{x}) &= \frac{1}{2}\left( m\ell^2\dot{\theta}^2 + 2m\ell\dot{x}\dot{\theta}\cos\theta + (M+m)\dot{x}^2\right)\\&\quad + mg\ell\cos\theta.
\end{align*}
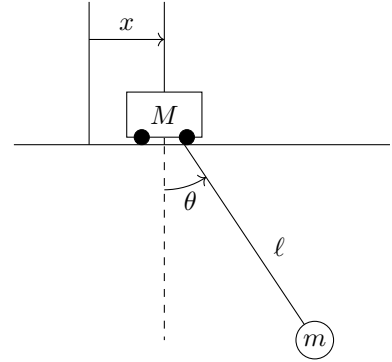
\begin{figure}[h!]
	\centering
	\begin{tikzpicture}
		\draw (-2,-0.4) -- (3,-0.4);
		\draw (0,-0.4) -- (0,1.5);
		\draw (0,0) -- (2,-3);
		\draw [dashed] (0,0) -- (0,-3);
		\draw [fill=white] (2,-3) circle [radius=0.25];
		\node at (2,-3) {$m$};
		\draw [fill=white] (-0.5,-0.3) rectangle (0.5,0.3);
		\node at (0,0) {$M$};
		\draw [fill] (-0.3,-0.3) circle [radius=0.1];
		\draw [fill] (0.3,-0.3) circle [radius=0.1];
		\path  (0,-3) coordinate (a)
            -- (0,0) coordinate (b)
            -- (2,-3) coordinate (c)
        pic["$\theta$", draw=black, ->, angle eccentricity=1.2, angle radius=1cm] {angle=a--b--c};
		\draw (-1,-0.4) -- (-1,1.5);
		\draw [->] (-1,1) -- (0,1);
		\node [above] at (-0.5,1) {$x$};
		\node [above right] at (1.33,-2) {$\ell$};
	\end{tikzpicture}
	\caption{The pendulum on the cart.}
	\label{fig:pendulum_cart}
\end{figure}
\newline
The underlying metric is
\begin{align*}
    g &= m\ell^2 d\theta\otimes d\theta + m\ell\cos\theta\left( dx\otimes d\theta + d\theta\otimes dx\right) \\&+ (M+m) dx\otimes dx.
\end{align*}
If we let $s\in G=\mathbb{R}$ and denote its action on $Q$ by $s.(\theta,x) = (\theta,x+s)$, then the Lagrangian is invariant under the tangent lift of left translations, that is, $L(\theta,x,\dot\theta,\dot x)=L(\theta, x+s, \dot{\theta},\dot{x})$. The configuration space has the structure of a principal $\mathbb{R}$-bundle,
\begin{gather*}
    \pi:Q = \mathbb{S}^1\times\mathbb{R} \to\mathbb{S}^1 \\
    (\theta,x) \mapsto \theta
\end{gather*}
and the vertical space is
\begin{equation*}
    \mathcal{V}_{(\theta,x)} = \ker T\pi_{(\theta,x)} = \mathrm{span}_\mathbb{R}\left( \frac{\partial}{\partial x}\right) \subset T_{(\theta,x)}Q.
\end{equation*}
The horizontal space arises from the mechanical connection, $\mathcal{A}:TQ\to\mathbb{R}$
\begin{equation*}
    \mathcal{A}_{(\theta,x)}(\dot\theta,\dot{x}) = \frac{1}{M+m} \left( (M+m)\dot{x} + m\ell\dot\theta\cos\theta\right),
\end{equation*}
as the locked inertia tensor (see \cite{BL} for instance) $\mathbb{I}:\mathbb{R}\to\mathbb{R}$ is simply $\mathbb{I} = (M+m)$. The resulting horizontal space is 
\begin{equation*}
    \overline{\mathcal{H}}_{(\theta,x)} = \ker \mathcal{A}_{(\theta,x)} = \mathrm{span}_\mathbb{R} \left( \frac{\partial}{\partial\theta} - \frac{m\ell\cos\theta}{M+m}\frac{\partial}{\partial x} \right).
\end{equation*}
We note that the momenta (which will be useful later) are
\begin{equation*}
    \begin{split}
        p_\theta &= m\ell^2\dot\theta + m\ell\dot{x}\cos\theta, \\
        p_x &= m\ell\dot\theta\cos\theta + (M+m)\dot{x}.
    \end{split}
\end{equation*}
Impacts can be imposed on this system in two qualitatively distinct ways depending on the geometry of the guard: interior and exterior impacts. As an abuse of notation, we say that the guard $\mathcal{S}\subset Q$ rather than $\mathcal{S}\subset \X = TQ$.
\subsubsection{Interior Impacts}
Suppose that the pendulum on the cart impacts at the location $\theta = \alpha$. In this case, the impact surface is $\mathcal{S} = \{\alpha\}\times\mathbb{R}\subset Q$. Notice that the impact surface is a lift over the base space, $\mathcal{S} = \pi^{-1}(\alpha)$. In this case, the (elastic) impact map is given by
\begin{equation}\label{eq:outer_cart_first}
    \tilde{\Delta}: \begin{cases}
    \begin{aligned}
        p_\theta &\mapsto -p_\theta + \frac{2m\ell}{M+m} p_x\cos\theta,\\
        p_x &\mapsto p_x.
    \end{aligned} \end{cases}
\end{equation}
 In velocity coordinates, the impact map is
\begin{equation}\label{eq:inner_cart}
    \Delta: \begin{cases}
        \begin{aligned}
            \dot{\theta} &\mapsto -\dot{\theta} \\
            \dot{x} &\mapsto \dot{x} + \frac{2m\ell}{M+m}\dot{\theta}\cos\theta
        \end{aligned}
    \end{cases}
\end{equation}
In particular, the mechanical connection is conserved: $\Delta^*\mathcal{A} = \mathcal{A}|_\mathcal{S}$.
\subsubsection{Exterior Impacts}
Suppose that the cart impacts a wall at the location $x = z$. In this case, the impact surface is $\mathcal{S} = \mathbb{S}^1\times\{z\}$. Unlike before, the surface is no longer a lift of something in the base space. In this case, the (elastic) impact map is given by
\begin{equation*}
    \tilde{\Delta}:\begin{cases}
        \begin{aligned}
            p_\theta &\mapsto p_\theta \\
            p_x &\mapsto -p_x+ \frac{2}{\ell}p_\theta\cos\theta
        \end{aligned}
    \end{cases}
\end{equation*}
 In velocity coordinates, the impact map is
\begin{equation}\label{eq:outer_cart}
    \Delta: \begin{cases}
        \begin{aligned}
            \dot{\theta} &\mapsto \dot{\theta} + \frac{2}{\ell}\dot{x}\cos\theta \\
            \dot{x} &\mapsto -\dot{x}
        \end{aligned}
    \end{cases}
\end{equation}

Unlike  the interior case, the mechanical connection is no longer preserved: $\Delta^*\mathcal{A}\ne\mathcal{A}$.

\begin{remark}
    Preservation of the connection can be advantageous as it can allow one to reduce the system. Likewise, breaking this symmetry has the advantage of introducing an added level of controllability to the system.
\end{remark}

The \textbf{problem} we are interested in studying in this paper consists of finding conditions for which both exterior and interior impacts are preserved by the mechanical connection $\mathcal{A}$ and studying in general hybrid systems on principal bundles.

We note that while this work is concerned with controls and the distinction between interior and exterior impacts the 
related work \cite {colombo2021generalized} focuses on reduction by symmetries in both, the continuous-time dynamics and the impact dynamics.

\subsection{Structure of the paper:}  In Section \ref{subsection_lagrangian_systems} we review the geometric formalism for mechanical systems on differentiable manifolds, in particular, for Lie groups, and the Weierstrass-Erdmann corner conditions. Section \ref{sec3} introduces hybrid mechanical systems and their conserved quantities together with a discrete Noether theorem for hybrid systems. In Section \ref{sec4} we define hybrid systems on principal bundles and we describe the subjacent geometry on the switching surface. Finally in Section \ref{sec5} we introduce exterior and interior impacts and study under which conditions the mechanical connection is preserved across impacts.
Conclusions remarks and thoughts about future directions close the paper.
\section{Preliminaries on the geometric formulation of mechanical systems} \label{subsection_lagrangian_systems}

This section introduces conventional mathematical notions to describe simple mechanical systems on differentiable manifolds which can be found in \cite{BL} and \cite{mars}, for instance. 

Let $Q$ be a differentiable manifold with $\hbox{dim}(Q)=n$ with local coordinates denoted by $q^i$. Its tangent and cotangent bundles are given by $TQ$ and $T^*Q$ with induced coordinates $(q^i,\dot{q}^i)$ and $(q^i,p_i)$ respectively. The tangent and cotangent spaces at a point $q$ are denoted by $T_qQ$ and $T_q^*Q$.

The dynamics of a mechanical system can be determined by the Euler-Lagrange equations associated with a Lagrangian function $L:TQ\to\mathbb{R}$. A \textit{mechanical Lagrangian} is given by $L(q,\dot{q})=K(q,\dot{q})-V(q),$ where $K:TQ\to\mathbb{R}$ is the kinetic energy and $V:Q\to\mathbb{R}$ the potential energy. The kinetic energy is given by $K(q, \dot q)=\frac{1}{2}\norm{\dot q}^2_q$, where $\norm{\cdot}_q$ denotes the norm at $T_qQ$ defined by some (pseudo)Riemannian metric on $Q$. In particular, a mechanical Lagrangian will be called \textit{kinetic} if $V=0$.

The equations describing the dynamics of the system are given by the Euler-Lagrange equations 
$$\displaystyle{\frac{\dd }{\dd  t}\left(\frac{\partial L}{\partial\dot{q}^{i}}\right)=\frac{\partial L}{\partial q^{i}}},$$ 
with $i=1,\ldots,n;$ a system of $n$ second-order ordinary differential equations. 

Let $M$ and $N$ be smooth manifolds. For each $p$-form $\alpha$ and each vector field $X$ on $M$, $\contr{X} \alpha$ denotes the interior product of $\alpha$ by $X$. For a smooth map $F:M\to N$, its tangent map $TF:TM\to TN$ will  be called its pushforward. 
The pullback of differential forms by this map will be denoted by $F^{*}$. Unless otherwise stated, sum over paired covariant and contravariant indices will be understood.

We denote $\mathbb{F}L\colon TQ\to T^*Q$ the Legendre transform associated with $L$; in the case of a mechanical Lagrangian, $\mathbb{F}L(u)(v) = \langle u, v\rangle_q$ and is a diffeomorphism. This transformation relates the Lagrangian and Hamiltonian formalisms. Define the Hamiltonian function $H:T^*Q\to\mathbb{R}$ as $H(q,p) = p^T \dot q(q,p)- L(q,\dot q(q,p))$, where we have used the inverse of the Legendre transformation to express $\dot q=\dot q(q,p)$. 
Trajectories obey Hamilton's equations
\begin{equation*}
    \dot{q}^i = \frac{\partial H}{\partial p_i}, \quad \dot{p}_i = -\frac{\partial H}{\partial q^i}.
\end{equation*}
A Hamiltonian is said to be \textit{mechanical} if its associated Lagrangian is mechanical.

\subsection{Variational Corner Conditions}
As the continuous dynamics (Euler-Lagrange) are \textit{variational}, we will define the impact to be variational as well. 
This is realized by the Weierstrass-Erdmann corner conditions (see \S 4.4 of \cite{kirk2004optimal} or \S 3.5 of \cite{brogliato1999nonsmooth}). Suppose that at some undetermined time, $t^*$, an impact occurs $q(t^*)\in S$. Variations in the impact time lead to energy conservation while variations in the impact location result in momentum conservation along the surface, i.e.
\begin{equation*}
    \mathbb{F}L\cdot \delta q \bigg|_{t^-}^{t^+} + \left( L - \langle \mathbb{F}L, \dot{q}\rangle\right)\cdot \delta t\bigg|_{t^-}^{t^+} = 0.
\end{equation*}
As $\delta q\in TS$ and $\delta t\in\mathbb{R}$ is arbitrary, the above condition can be written as:
\begin{equation}\label{eq:WEL}
    \begin{split}
        \mathbb{F}L^+ - \mathbb{F}L^- &= \alpha \cdot dh,\\
        L^+ - \langle \mathbb{F}L^+,\dot{q}^+\rangle &= L^- - \langle \mathbb{F}L^-,\dot{q}^-\rangle,
    \end{split}
\end{equation}
where $S = \left\{ q\in Q : h(q) = 0\right\}$ is given by the level set of a smooth function $h:Q\to\mathbb{R}$, and the multiplier $\alpha$ is set so both equations are satisfied. 
The superscripts denote the values immediately pre- and post-impact, e.g.
\begin{equation*}
    \begin{split}
        \mathbb{F}L^+ &= \lim_{t\searrow t^*}\, \mathbb{F}L(q(t),\dot{q}(t)), \\
        \mathbb{F}L^- &= \lim_{t\nearrow t^*}\, \mathbb{F}L(q(t),\dot{q}(t)).
    \end{split}
\end{equation*}
These corner conditions have a clearer interpretation in the Hamiltonian setting:
\begin{equation}\label{eq:pH}
\begin{split}
p^+ &= p^- + \alpha\cdot dh, \\
H^+ &= H^-.
\end{split}
\end{equation}
i.e. energy at impacts is conserved and the change in momentum is perpendicular to the impact surface which is precisely specular reflection.
\subsection{Lie group actions}
Let $G$ be a finite dimensional Lie group and $Q$ a smooth manifold. A \textit{left-action} of $G$ on $Q$ is a smooth map $\psi:G\times Q\to Q$ such that $\psi(e,g)=g$ and $\psi(h,\psi(g,q))=\psi(hg,q)$ for all $g,h\in G$ and $q\in Q$, where $e$ is the identity of the group $G$ and the map $\psi_g:Q\to Q$ given by $\psi_g(q)=\psi(g,q)$ is a diffeomorphism for all $g\in G$.

For a Lie group $G$, let $\mathfrak{g}$ be its Lie algebra, $\mathfrak{g}:=T_{e}G$.
Let $\mathcal{L}_{g}:G\to G$ be the left-translation of the element $g\in G$ given by $\mathcal{L}_{g}(h)=gh$ for $h\in G$. Left-translation is a left-action of $G$ on itself \cite{HolSch09}.
Its tangent map  (i.e, the linearization or tangent lift) is denoted by $T_{h}\mathcal{L}_{g}:T_{h}G\to T_{gh}G$. Similarly, the cotangent map (cotangent lift) is denoted by $T_{h}^{*}\mathcal{L}_{g}: T^{*}_{h}G\to T^{*}_{gh}G$. It is well known that the tangent and cotangent lifts are Lie group actions (see \cite{HolSch09}, Chapter $6$).

A Lagrangian function $L:TG\to\mathbb{R}$ is said to be invariant under the tangent lift of left translations if $L(g,\dot{g})=L(\mathcal{L}_hg,T_g\mathcal{L}_h\dot{g})$
 
Let $\psi\colon G\times Q\to Q$ be a Lie group action. $\psi$ is said to be a \textit{free} action if it has no fixed points, that is, $\psi_g(q)=q$ implies $g=e$. The Lie group action $\psi$ is said to be a \textit{proper} action if the map
$\tilde{\psi}:G\times Q\to Q\times Q$ given by $\tilde{\psi}(g,q)=(q,\psi(g,q))$, is proper, that is, if $K\subset Q\times Q$ is compact, then $\tilde{\psi}^{-1}(K)$ is compact.

\subsection{Principal Bundles}
The action of a group on a manifold leads to the notion of a principal bundle.
\begin{definition}
    A fiber bundle is a triple $(E,\pi,M)$ where $\pi:E\to M$ is a surjective map with the property that for all $m\in M$, there exists an open neighborhood $m\in U\subset M$ such that there is a diffeomorphism
    \begin{equation*}
        \pi^{-1}(U)\cong U\times F,
    \end{equation*}
    where $F$ is said to be the fiber of the bundle.
\end{definition}
In the special case where the fiber is a Lie group, additional structure can be imposed on the fiber bundle.
\begin{definition}
    Let $(E,\pi,M)$ be a fiber bundle with fiber $G$ (a Lie group). This is a principal $G$-bundle if $G$ acts on $E$ such that it preserves the fibers and acts freely and transitively on each fiber.
\end{definition}
Let $\Phi:G\times Q\to Q,$ $(g,q)\mapsto\Phi_{g}(q)$ be a free and proper left action of a Lie group $G$ on a manifold $Q.$ Thus we can define the principal bundle $\pi:Q\to M:= Q/G,$ where $M$ is endowed with the unique manifold structure for which $\pi$ is a submersion (see \cite{kobayashi1996foundations}). $M:=Q/G$ is called the \textit{shape space} in mechanics.

\subsection{Momentum Maps and the Mechanical Connection}
Momentum maps capture in a geometric way conserved quantities associated with symmetries. The momentum map is related to the so-called mechanical connection by the locked inertia tensor as defined below. Roughly speaking, a \textit{connection} tells us how a quantity associated with a manifold changes as we move from one point to another - it ``connects'' neighboring spaces. In terms of fiber bundles, a connection tells us how movement in the total space induces change along the fiber.

Let $G$ be a finite-dimensional Lie group acting on the cotangent bundle by the cotangent lift of left translations. Denote the corresponding infinitesimal action of $\mathfrak{g}$ on $T^{*}Q$ by $\xi\mapsto\xi_{T^{*}Q}$, a map of $\mathfrak{g}$ to $\mathfrak{X}(T^{*}Q)$, the space of vector fields on $T^{*}Q$. We write the action of $g\in G$ on $z\in T^{*}Q$ as simply $gz$, the vector field $\xi_{T^{*}G}$ is obtained at $z$ by differentiating $gz$ with respect to $g$ in the direction of $\xi$ at $g=e$. Explicitly,  $$\xi_{T^{*}G}(z)=\frac{\dd}{\dd\epsilon}(\exp(\epsilon\xi)\cdot z)\Big{|}_{\epsilon=0}.$$

A map $\textbf{J}:T^{*}Q\to\mathfrak{g}$ is called a \textit{momentum map} if $X_{\langle\mathbf{J},\xi\rangle}=\xi_{T^{*}G}$ for each $\xi\in\mathfrak{g}$, where $\langle\mathbf{J},\xi\rangle(z)=\langle\mathbf{J}(z),\xi\rangle$.
Noether's theorem states that if $H$ is a $G$-invariant Hamiltonian function on $T^{*}G$ then $\mathbf{J}$ is conserved on trajectories of the Hamiltonian vector field $X_H$.

The momentum map is closely related to the \textit{mechanical connection}. Let $\langle\langle\cdot, \cdot\rangle\rangle$ be the group-invariant metric induced by the invariant Lagrangian.
For each $q\in Q$ define the \textit{locked inertia tensor} to be the map $\mathbb{I}(q):\mathfrak{g}\to\mathfrak{g}^{*}$ defined by $$\langle\mathbb{I}(q)\eta,\zeta\rangle=\langle\langle\eta_{Q}(q),\zeta_{Q}(q)\rangle\rangle.$$ We define the \textit{mechanical connection} on the principal bundle $Q\to Q/G$ to be the map $\mathcal{A}_s:TQ\to\mathfrak{g}$ given by $$\mathcal{A}_s(q,v)=\mathbb{I}(q)^{-1}(\mathbf{J}(q,v));$$ that is, $\mathcal{A}_s$ is the map that assigns to each $(q,v)$ the corresponding angular velocity of the locked system.

One can check that $\mathcal{A}_s$ is $G$-invariant and $\mathcal{A}_s(\xi_{Q}(q))=\xi$, and the horizontal space of the connection is given by $\overline{\mathcal{H}}_q=\{(q,v)|\mathbf{J}=0\}\subset T_qQ$ and the vertical space is given by $\mathcal{V}_q=\{\xi_{Q}(q)|\xi\in\mathfrak{g}\}$.

\section{Hybrid mechanical systems}\label{sec3}
Hybrid dynamical systems are dynamical systems characterized by their mixed behavior of continuous and discrete dynamics where the transition is determined by the time when the continuous flow switches from the ambient space to a co-dimension one submanifold. This class of dynamical systems is given by an $4$-tuple, $\mathcal{H}=(\X,\mathcal{S},z,\Delta)$. The pair ($\X$,$z$) describes the continuous dynamics as 
$\dot{x}(t) = z(x(t))$,
where $\X$ is a smooth manifold and $z$ a $C^1$ vector field on $\X$. Additionally, ($\mathcal{S}$, $\Delta$) describes the discrete dynamics as 
$x^{+}=\Delta(x^{-})$ where $\mathcal{S}\subset\X$ is a smooth submanifold of co-dimension one called the \textit{switching surface}.

The hybrid dynamical system describing the combination of both dynamics is given by
\begin{equation}\label{sigma}\Sigma: \begin{cases}
\dot{x} = z(x),& x\not\in \mathcal{S}\\
x^+ = \Delta(x^-),& x^-\in \mathcal{S}.
\end{cases}\end{equation}

A solution of a hybrid dynamical system may experience a Zeno state if infinitely many impacts occur in a finite amount of time. To exclude these types of situations, we require the set of impact times to be closed and discrete, as in \cite{westervelt_feedback_2018}, so we will assume implicitly throughout the remainder of the paper that $\overline{\Delta}({S})\cap{S}=\emptyset$ (where $\overline{\Delta}({S})$ denotes the closure of $\Delta({S})$) and that the set of impact times is closed and discrete.

\begin{definition}
	A simple hybrid system $\mathcal{H}=(\mathcal{X}, \mathcal{S}, z, \Delta)$ is said to be a \textit{simple hybrid Lagrangian system} if it is determined by $\mathcal{H}_{L}:= (TQ, {\mathcal{S}_L}, X_{L}, \Delta_{L})$, where $X_{L}$ is the Lagrangian vector field associated with the Lagrangian system determined by $L$, ${\mathcal{S}_L}$ is the switching surface, a submanifold of $TQ$ with co-dimension one, and $\Delta_{L}:{\mathcal{S}}_L\to TQ$ is the impact map described by the variational corner conditions \eqref{eq:WEL}, which is a smooth embedding.

  The simple hybrid Lagrangian system generated by $\mathcal{H}_{L}$ is given by
  \begin{equation}
    \label{RHDS}\Sigma_{L}:
    \left\{\begin{array}{ll}\dot{\upsilon}(t)=X_{L}(\upsilon(t)), & \hbox{ if } \upsilon^{-}(t)\notin{\mathcal{S}_L},\\ \upsilon^{+}(t)=\Delta_{L}(\upsilon^{-}(t)),&\hbox{ if } \upsilon^-(t)\in{\mathcal{S}_L}, 
    \end{array}\right.
  \end{equation}
  where $\upsilon(t)=(q(t),\dot{q}(t))\in TQ $.
\end{definition}

In a similar fashion, one can define \textit{simple hybrid Hamiltonian systems} associated with a Hamiltonian function $H:T^{*}Q\to\mathbb{R}$ through the $4$-tuple $\mathcal{H}_H=(T^{*}Q, \mathcal{S}_{H},X_{H},\Delta_{H})$ where $X_{H}$ is the Hamiltonian vector field associated with the Hamiltonian system determined by $H$, ${S_{H}}$ is the switching surface, a submanifold of $T^{*}Q$ with co-dimension one, and the smooth embedding $\Delta_{H}:{\mathcal{S}}_{H}\to T^{*}Q$ is the impact map given by the variational corner conditions \eqref{eq:pH}.

\subsection{Hybrid Noether theorem}
\begin{definition}[\cite{colombo2021generalized}] Let $(\X,S,z,\Delta)$ be a hybrid dynamical system.
    A function $f$ on $\mathcal X$ is called a \textit{hybrid constant of the motion} if it is preserved by the hybrid flow, namely, $f \circ \varphi_t^\mathcal{H} = f$. In other words,
    $z(f)=0$ and $ f \circ \Delta = f \circ i$, where $i\colon S \hookrightarrow \X$ is the canonical inclusion.
\end{definition}

There is a natural lift $\psi^{T^{*}Q}$ of the action $\psi$ to $T^{*}Q$, the \textit{cotangent lift}, defined by $(g,(q,p))\mapsto (T^{*}\psi_{g^{-1}}(q,p))$.  By a \textit{hybrid action} on the simple hybrid Hamiltonian system $\mathcal{H}_H$  we mean a Lie group  action $\psi\colon G\times Q\to Q$ such that
\begin{itemize}
	\item $H$ is invariant under $\psi^{T^{*}Q}$, i.e. $H\circ \psi^{T^{*}Q}=H$,
	\item $\psi^{T^{*}Q}$ restricts to an action of $G$ on $S_{H}$,
	\item $\Delta_{H}$ is equivariant with respect to the previous action, namely $$\Delta_{H}\circ \psi^{T^{*}Q}_g\mid_{S_{H}}=\psi^{T^{*}Q}_g\circ \Delta_{H}.$$
\end{itemize}

\begin{definition}[\cite{colombo2021generalized}]
A momentum map $\mathbf{J}$ will be called a \textit{generalized hybrid momentum map} for $\mathcal{H}_H$ if, for each regular value $\mu_-$ of $\mathbf{J}$,
\begin{equation}
  \Delta_{H} \left(\mathbf{J}|_{S_H}^{-1}(\mu_-)  \right) \subset \mathbf{J}^{-1}(\mu_+),
  \label{generalized_hybrid_momentum}
\end{equation}
for some regular value $\mu_+$. In other words, for every point in the switching surface such that the momentum before the impact takes a value of $\mu_-$, the momentum will take a value $\mu_+$ after the impact. That is, the switching map  translates the dynamics from one level set of the momentum map into another.
In particular, when $\mu_+=\mu_-$ for each $\mu^-$ (i.e., $\Delta_{H}$ preserves the momentum map), $\mathbf{J}$ is called \myemph{hybrid momentum map} (see \cite{ames_hybrid_2006}). \end{definition}

Given an action in the Lie algebra such that it preserves the Hamiltonian function and is equivariant with respect to the impact map. The \textit{hybrid Noether theorem} states that for all $\xi\in\mathfrak{g}$, the generalized momentum map $\mathbf{J}^{\xi}$ is a hybrid constant of the motion.

\section{Hybrid Mechanical Systems on Principal Bundles}\label{sec4}
As an impact system has the added structure of the impact surface, the corresponding principal bundle requires more structure as well. In addition of the surface, $\mathcal{S}$, a choice of metric is also needed for the corner conditions \eqref{eq:WEL}.
\begin{definition}
    A \textit{$G$-impact system} is a tuple\newline $(E,M,\pi,\mathcal{S},L)$ where
    \begin{enumerate}
        \item $\pi:E\to M$ is a $G$-principal bundle,
        \item $\mathcal{S}\subset E$ is an embedded, codimension 1 submanifold,
        \item $L:TE\to\mathbb{R}$ is a  mechanical Lagrangian invariant under the tangent lift of left translations.
    \end{enumerate}
\end{definition}
Impacts take place on $TE|_\mathcal{S}$ (on the Lagrangian side) or $T^*E|_\mathcal{S}$ (on the Hamiltonian side). Strictly speaking, these sets are the guards, rather than $\mathcal{S}$.
These two maps are related by the fiber derivative:
\begin{equation*}
    \begin{tikzcd}
        TE|_\mathcal{S} \arrow[r, "\Delta"] \arrow[d, "\mathbb{F}L"] & TE \arrow[d, "\mathbb{F}L"] \\
        T^*E|_\mathcal{S} \arrow[r, "\tilde{\Delta}"] & T^*E
    \end{tikzcd}
\end{equation*}
Throughout, $\Delta$ will represent the impact map on the velocities while $\tilde{\Delta}$ will be the impact map on the momenta.

\begin{definition}
    The impact surface, $\mathcal{S}$, is vertical if $\mathcal{S}=\pi^{-1}(\Sigma)$ for some embedded, codimension 1 submanifold $\Sigma\subset M$. The impact surface $\mathcal{S}$ is horizontal if $(T\mathcal{S})^\perp\subset\mathcal{V}$. Here, $\mathcal{V}\subset TE$ is the vertical space given by
    \begin{equation*}
        \mathcal{V}_x = \left\{ v\in T_xE : T\pi(v) = 0\right\}.
    \end{equation*}
\end{definition}

\begin{lemma}\label{lem:vertical_invariant}
    $\mathcal{S}$ is vertical if and only if $\mathcal{S}$ is invariant under the group action of left translations, i.e. $h.\mathcal{S} = \mathcal{S}$ for all $h\in G$.
\end{lemma}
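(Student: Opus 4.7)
The plan is to prove both implications directly from the structure of a principal bundle, using the fact that $\pi$ is $G$-invariant and that $G$ acts freely and transitively on each fiber.

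For the forward direction, suppose $\mathcal{S} = \pi^{-1}(\Sigma)$. Because $G$ acts along fibers of $\pi$, we have $\pi(h.x) = \pi(x)$ for every $h\in G$ and $x\in E$. Hence for any $x\in\mathcal{S}$, $\pi(h.x) = \pi(x)\in\Sigma$, so $h.x\in\pi^{-1}(\Sigma) = \mathcal{S}$; applying the same argument to $h^{-1}$ gives $h.\mathcal{S}=\mathcal{S}$.

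For the converse, assume $h.\mathcal{S} = \mathcal{S}$ for every $h\in G$, and set $\Sigma := \pi(\mathcal{S})\subset M$. I first check the set equality $\mathcal{S} = \pi^{-1}(\Sigma)$. The inclusion $\mathcal{S}\subset\pi^{-1}(\Sigma)$ is immediate. Conversely, given $y\in\pi^{-1}(\Sigma)$, pick $x\in\mathcal{S}$ with $\pi(x)=\pi(y)$; since $G$ acts freely and transitively on the fiber $\pi^{-1}(\pi(x))$, there is a unique $h\in G$ with $y=h.x$, and $G$-invariance of $\mathcal{S}$ forces $y\in\mathcal{S}$.

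It remains to upgrade $\Sigma$ to an embedded codimension 1 submanifold of $M$. This is a local statement, so I work in a principal bundle chart $\pi^{-1}(U)\cong U\times G$ in which $\pi$ becomes the projection on the first factor and the $G$-action becomes left multiplication on the second. Under this identification, any $G$-invariant subset of $\pi^{-1}(U)$ is automatically of product form $\Sigma_U\times G$ with $\Sigma_U = \pi(\mathcal{S})\cap U$, by freeness and transitivity of the action on fibers. Because $\mathcal{S}$ is an embedded codimension 1 submanifold of $E$, the slice $\Sigma_U\times G$ is an embedded codimension 1 submanifold of $U\times G$, which is equivalent to $\Sigma_U$ being embedded of codimension 1 in $U$. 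Patching over a cover of trivializations of $\pi$ promotes $\Sigma$ to a globally embedded codimension 1 submanifold of $M$, completing the proof.

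The only step that requires any care is the last one: deducing the product form $\Sigma_U\times G$ in a local trivialization from $G$-invariance, together with the fact that embeddedness of $\mathcal{S}$ then transfers to embeddedness of $\Sigma_U$. Everything else is a direct consequence of the fact that fibers of $\pi$ are precisely the orbits of $G$.
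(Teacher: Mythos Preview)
Your proof is correct and follows the same core approach as the paper: the forward direction uses that the $G$-action preserves fibers, and the converse uses transitivity of $G$ on each fiber to conclude that $\mathcal{S}$ is a union of whole fibers, hence $\mathcal{S}=\pi^{-1}(\pi(\mathcal{S}))$. You go further than the paper by checking, via local trivializations and the product form $\Sigma_U\times G$, that $\Sigma=\pi(\mathcal{S})$ is genuinely an embedded codimension~1 submanifold of $M$; the paper's proof is a two-line sketch that omits this verification entirely.
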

\begin{proof}
    If $\mathcal{S}$ is vertical, it is clearly invariant under the group action of left translations (as the group action preserve fibers). Suppose that $\mathcal{S}$ is invariant under the group action then $\mathcal{S}$ must be vertical as the group acts transitively on each fiber.
\end{proof}

\begin{proposition}
    The variational corner conditions, \eqref{eq:pH}, are equivalent to 
    \begin{equation}\label{eq:impact}
         \left( \mathrm{Id}\times\tilde{\Delta}\right)^*\vartheta_H = i^*\vartheta_H,
    \end{equation}
    \begin{equation}\label{eq:impact_diagram}
        \begin{tikzcd}
             \mathbb{R}\times T^*E|_\mathcal{S} \arrow[rr, "\mathrm{Id}\times\tilde{\Delta}"] \arrow[dr, "\mathrm{Id}\times\pi" below left] && \mathbb{R}\times T^*E \arrow[dl, "\mathrm{Id}\times\pi"] \\
             &\mathbb{R}\times E&
         \end{tikzcd}
    \end{equation}
    such that the diagram is commutative. Here, $\vartheta_H = p_i\cdot dq^i - H\cdot dt \in \Omega^1(\mathbb{R}\times T^*E)$ is the action form
\end{proposition}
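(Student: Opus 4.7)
The plan is to unpack both sides of \eqref{eq:impact} in coordinates, use the commutativity of diagram \eqref{eq:impact_diagram} to reduce everything to fiberwise statements over $\mathcal{S}$, and then recognize that equality on the two natural families of test vectors recovers the two lines of \eqref{eq:pH}.

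First I would note that the commutativity $(\mathrm{Id}\times\pi)\circ(\mathrm{Id}\times\tilde{\Delta}) = \mathrm{Id}\times\pi$ forces $\tilde{\Delta}$ to be fiber-preserving over $E$: it maps $T^*_qE$ to $T^*_qE$ for each $q\in\mathcal{S}$, so in local bundle coordinates $(t,q^i,p_i)$ one has $\tilde{\Delta}(t,q,p)=(t,q,p^+(q,p))$. Using this, pick any $(\dot t,\dot q,\dot p)\in T_{(t,q,p)}(\mathbb{R}\times T^*E|_\mathcal{S})$; here $\dot q\in T_q\mathcal{S}$ while $\dot p\in T^*_qE$ is unrestricted. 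A direct computation gives
\begin{equation*}
    i^*\vartheta_H(\dot t,\dot q,\dot p) = p_i\,\dot q^i - H(q,p)\,\dot t,
\end{equation*}
\begin{equation*}
    (\mathrm{Id}\times\tilde{\Delta})^*\vartheta_H(\dot t,\dot q,\dot p) = p^+_i\,\dot q^i - H(q,p^+)\,\dot t,
\end{equation*}
where $\dot p$ drops out of both expressions because $dq^i$ and $dt$ do not pair with fiber directions.

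Next I would extract \eqref{eq:pH} by testing on two complementary families. Choosing $\dot t=0$ and $\dot q\in T_q\mathcal{S}$ arbitrary, equality of the two pullbacks gives $(p^+-p^-)\cdot \dot q = 0$ for all $\dot q\in T_q\mathcal{S}$, so $p^+-p^-$ annihilates $T_q\mathcal{S}$. Writing $\mathcal{S}=h^{-1}(0)$ locally, $T_q\mathcal{S}=\ker dh_q$, hence $p^+-p^- = \alpha\cdot dh$ for some multiplier $\alpha$; this is the first identity in \eqref{eq:pH}. Choosing $\dot q=0$ and $\dot t\ne 0$ gives immediately $H^+=H^-$, the second identity. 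Conversely, assuming \eqref{eq:pH}, the same coordinate formulas show that the pullbacks agree on every tangent vector of $\mathbb{R}\times T^*E|_\mathcal{S}$, since the $\dot q^i$ terms contribute $\alpha\,dh(\dot q)=0$ for $\dot q\in T_q\mathcal{S}$ and the $\dot t$ terms agree by $H^+=H^-$.

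The one point requiring care—and the likely main obstacle—is the correct description of $T(\mathbb{R}\times T^*E|_\mathcal{S})$ and the observation that $\vartheta_H$ only pairs with the ``base'' and ``time'' components of tangent vectors, so that the arbitrariness of $\dot p$ plays no role. Once this is clear, the argument reduces to the two-line coordinate calculation above, together with the standard fact that a covector annihilating $\ker dh$ must be a scalar multiple of $dh$. This is what yields the Lagrange multiplier $\alpha$ appearing in \eqref{eq:pH} and closes the equivalence.
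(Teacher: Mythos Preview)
Your proposal is correct and follows essentially the same approach as the paper's proof: both are direct coordinate computations of the two pullbacks, using the commutativity of \eqref{eq:impact_diagram} to fix the base point and then reading off the momentum and energy conditions separately. The only cosmetic difference is that the paper chooses adapted coordinates with $\mathcal{S}=\{q^n=0\}$ so that $dh=dq^n$, whereas you keep a general defining function $h$ and invoke the annihilator description of $T_q\mathcal{S}$; the content is identical.
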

\begin{proof}
    Suppose that $\mathcal{S}$ is given (locally) by the vanishing of the last coordinate, $q^n = 0$. Then, \eqref{eq:impact} in coordinates is equivalent to
    \begin{align*}
        \begin{cases}
            H^+dt^+ = H^-dt^-, \\
            p_1^+{dq^1}^+ + \ldots + p_{n-1}^+ {dq^{n-1}}^+ = \\ \hspace{3cm}
            p_1^-{dq^1}^- + \ldots + p_{n-1}^- {dq^{n-1}}^-.
        \end{cases}
    \end{align*}
    Commutativity of \eqref{eq:impact_diagram} means that ${q^k}^+ = {q^k}^-$ and $t^+ = t^-$. These conditions are precisely \eqref{eq:pH} as $dh = dq^{n}$.
\end{proof}
\begin{proposition}
    The variational corner conditions for a mechanical Lagrangian with Riemannian metric $g$ are equivalent to $\Delta^*\omega = \omega|_{\mathcal{S}}$ and $\Delta^*g = g|_{\mathcal{S}}$
    (along with the positions being fixed, $q^+ = q^-$)
    where $\omega\in\Omega^2(TE)$ is the pull-back of the canonical symplectic form on $T^*E$ via the metric. 
\end{proposition}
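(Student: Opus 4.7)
The plan is to move to the Hamiltonian side via the Legendre transform $\mathbb{F}L$. Since $\omega = \mathbb{F}L^*\omega_0$ on $TE$ and $\mathbb{F}L\circ\Delta = \tilde{\Delta}\circ\mathbb{F}L$, the condition $\Delta^*\omega = \omega|_{\mathcal{S}}$ is equivalent to $\tilde{\Delta}^*\omega_0 = \omega_0|_{\mathcal{S}}$; reading $g$ as the quadratic function $v\mapsto g_q(v,v)$ on $TE$, preservation under $\Delta$ dualizes to preservation of the cometric $g^{-1}(p,p)$ on $T^*E$. With $H = \tfrac{1}{2}g^{-1}(p,p)+V(q)$ and $q^+=q^-$ fixing the potential, this last is equivalent to energy conservation $H^+=H^-$. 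So the claim reduces to showing that $\tilde{\Delta}^*\omega_0 = \omega_0|_{\mathcal{S}}$, together with $q^+=q^-$, is equivalent to the momentum-jump condition $p^+ - p^- = \alpha\,dh$ of \eqref{eq:pH}.

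For the forward direction, starting from \eqref{eq:pH} I would note that
\begin{equation*}
  \tilde{\Delta}^*\theta_0 - \iota^*\theta_0 = (p^+ - p^-)_j\,dq^j = \alpha\,dh
\end{equation*}
as a 1-form on $T^*E|_{\mathcal{S}}$, where $\theta_0 = p_j\,dq^j$ and $\iota$ is the inclusion of $T^*E|_{\mathcal{S}}$. Since every tangent vector to $T^*E|_{\mathcal{S}}$ projects into $T\mathcal{S} = \ker dh$, this 1-form vanishes identically; hence $\tilde{\Delta}^*\theta_0 = \iota^*\theta_0$, and taking exterior derivatives gives $\tilde{\Delta}^*\omega_0 = \iota^*\omega_0$.

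For the converse I would work in coordinates adapted to $\mathcal{S}$ with $h=q^n$, and write $\tilde{\Delta}(q,p) = (q,\,p+\beta(q,p))$ so that $q^+=q^-$ is built in. A direct expansion yields
\begin{equation*}
  \tilde{\Delta}^*\omega_0 - \iota^*\omega_0 = \sum_{I<n}\,dq^I \wedge d\beta_I,
\end{equation*}
so vanishing of this 2-form forces each $\beta_I$ ($I<n$) to be independent of $p$ and to be a closed 1-form on $\mathcal{S}$; call this piece $b_I(q')$. Substituting into the metric-preservation identity $g^{jk}(p_j+\beta_j)(p_k+\beta_k) = g^{jk}p_jp_k$ produces a quadratic equation in the remaining component $\beta_n$ whose discriminant must be non-negative for every $p$. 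Restricting $p$ to the hyperplane $\{g^{jn}p_j = 0\}$, on which the quadratic-in-$p$ part of the discriminant vanishes, the residual linear-in-$p$ piece must vanish identically, yielding the algebraic relation $\tilde{g}^{-1}b = (w^Tb/\alpha)w$, with $\tilde{g}^{-1}$, $w = (g^{In})_{I<n}$, and $\alpha = g^{nn}$ denoting the blocks of $g^{-1}$ in these coordinates. The Schur-complement inequality $\alpha > w^T\tilde{g}w$, forced by positive-definiteness of $g^{-1}$, then rules out the nontrivial branch and delivers $b=0$, i.e.\ $p^+ - p^- \in \mathrm{span}(dh)$.

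The main obstacle is this converse step. Symplectic preservation is strictly weaker than the tautological equality $\tilde{\Delta}^*\theta_0 = \iota^*\theta_0$ (it only forces the difference to be closed), so a priori one could add any closed tangential 1-form $b_I(q')$ without breaking $\tilde{\Delta}^*\omega_0 = \iota^*\omega_0$. Coupling this closedness with the nonlinear energy-preservation equation and invoking Riemannian positive-definiteness via the Schur complement is the technical heart of the argument and the only route I see to pinning down $b = 0$ and recovering the precise Weierstrass–Erdmann form.
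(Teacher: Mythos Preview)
Your argument is correct and, in fact, more complete than the paper's own proof. The paper works in the same adapted coordinates with $h=q^n$ and simply asserts that the restricted 2-form identity
\[
\sum_{k=1}^{n-1} dq^{k}\wedge dp_k^{+}=\sum_{k=1}^{n-1} dq^{k}\wedge dp_k^{-}
\]
``means that the only momentum that can change during impacts is $p_n$,'' and then observes that $\Delta^*g=g|_{\mathcal{S}}$ is kinetic-energy conservation. As you correctly flag, symplectic preservation alone only forces the tangential jump $b_I=p_I^{+}-p_I^{-}$ ($I<n$) to be $p$-independent and closed on $\mathcal{S}$, not to vanish; the paper's one-line step skips over exactly this point. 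Your coupling of the residual closed 1-form $b$ with the quadratic energy identity, and the use of positive-definiteness of $g^{-1}$ via the Schur complement $\alpha-w^T\tilde g\,w>0$ to force $w^Tb=0$ and hence $b=0$, supplies precisely the missing ingredient. Likewise, your forward direction through $\tilde\Delta^*\theta_0=\iota^*\theta_0$ (and then $d$) is a clean justification that the paper does not spell out. In short: same coordinate set-up and same target equalities, but you carry out the converse rigorously where the paper merely states the conclusion.
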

\begin{proof}
    Again, suppose that $\mathcal{S}$ is described by $q^n=0$. 
    The first equality states that 
    \begin{equation*}
        \sum_{k=1}^{n-1} \, {dq^k}^+\wedge dp_k^+ = 
        \sum_{k=1}^{n-1} \, {dq^k}^-\wedge dp_k^-
    \end{equation*}
    This means that the only momentum that can change during impacts is $p_n$. The second equality follows from conservation of (kinetic) energy.
\end{proof}
\section{Interior vs Exterior Impacts}\label{sec5}
An exterior impact preserves the inner dynamics while an internal impact preserves the outer dynamics. 
\begin{definition}
    An impact is interior if the impact surface is vertical. An impact is exterior if $\pi(\mathcal{S}) = M$, the whole shape space.
\end{definition}
Notice that the impact surface being horizontal is more restrictive than the impact being exterior.
Properties of vertical/horizontal impact surfaces are shown in the following two theorems.
\begin{theorem}\label{thm:hor}
    The impact surface being horizontal is equivalent to $T\pi\circ\Delta = T\pi$.
\end{theorem}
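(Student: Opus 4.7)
The plan is to unpack $\Delta$ via the Weierstrass--Erdmann corner conditions \eqref{eq:pH} and then reinterpret the equation $T\pi\circ\Delta = T\pi$ as a statement about verticality of the normal to $\mathcal{S}$.

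Locally describe the switching surface as $\mathcal{S} = \{h = 0\}$ for some smooth $h\colon E\to\mathbb{R}$ with $dh|_{\mathcal{S}}\neq 0$. The corner conditions give $q^+ = q^-$ and $p^+ - p^- = \alpha\,dh$, where the multiplier $\alpha = \alpha(v)$ is fixed by energy conservation. Since $L$ is mechanical, the Legendre transform coincides with the musical isomorphism $g^\flat$ associated with the metric on $E$ (indeed $\mathbb{F}L(u)(v)=\langle u,v\rangle_q$), so pushing the corner relations back to $TE$ yields
\begin{equation*}
    \Delta(v) \;=\; v + \alpha(v)\,\nabla_g h, \qquad v\in TE|_{\mathcal{S}},
\end{equation*}
where $\nabla_g h := g^\sharp(dh)$ is the Riemannian gradient.

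Because $\Delta$ preserves base points, the equation $T\pi\circ\Delta = T\pi$ is equivalent to $\Delta(v) - v \in \ker T\pi = \mathcal{V}$ for every $v\in TE|_{\mathcal{S}}$, i.e.\ $\alpha(v)\,\nabla_g h \in \mathcal{V}$ for all such $v$. On the other hand, $T\mathcal{S} = \ker dh$ has codimension one, so its $g$-orthogonal complement is the line spanned by $\nabla_g h$, and horizontality $(T\mathcal{S})^\perp \subset \mathcal{V}$ is simply the statement $\nabla_g h \in \mathcal{V}$. One direction of the equivalence is then immediate: if $\nabla_g h \in \mathcal{V}$ then $\alpha(v)\nabla_g h\in\mathcal{V}$ for every $v$, independently of the value of $\alpha(v)$.

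The only subtlety, and the main (minor) obstacle, is the converse: I must rule out the trivial branch $\alpha \equiv 0$. Substituting $p^+ = p^- + \alpha\,dh$ into $|p^+|^2_{g^{-1}} = |p^-|^2_{g^{-1}}$ (which holds since the potential cancels at $q^+ = q^-$) yields the nontrivial root $\alpha(v) = -2\langle v,\nabla_g h\rangle_g / \|\nabla_g h\|_g^2$, so $\alpha(v) = 0$ precisely when $v\in T\mathcal{S}$. Choosing any $v \in TE|_{\mathcal{S}}\setminus T\mathcal{S}$ (which exists since $T\mathcal{S}$ has positive codimension in $TE|_{\mathcal{S}}$) gives $\alpha(v)\neq 0$, and from $\alpha(v)\nabla_g h \in \mathcal{V}$ we deduce $\nabla_g h\in \mathcal{V}$, i.e.\ $\mathcal{S}$ is horizontal, completing the equivalence.
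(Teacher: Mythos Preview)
Your proof is correct and shares the paper's key observation that the corner conditions force $\Delta(v)-v\in(T\mathcal{S})^\perp$, so $T\pi\circ\Delta=T\pi$ amounts to $(T\mathcal{S})^\perp\subset\mathcal{V}$. The paper's own argument only makes the forward implication explicit, whereas you additionally handle the converse by computing $\alpha(v)$ and choosing a transversal $v$ with $\alpha(v)\ne 0$; this is a genuine (if small) improvement in completeness.
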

\begin{proof}
    As $\mathcal{S}$ is horizontal, we have $(T\mathcal{S})^\perp\subset \mathcal{V}$. This implies that $v^+ - v^-\in\mathcal{V}$, i.e. $T\pi(v^+) = T\pi(v^-)$.
\end{proof}
\begin{remark}
    The statement that $T\pi\circ\Delta= T\pi$ means that the impact reduces to the identity on the shape space. As such, an impact is unobservable from dynamics on the shape variables. This condition is equivalent to the following commutative diagram:
    \begin{equation}\label{diag:horizontal}
        \begin{tikzcd}
            & TE|_\mathcal{S} \arrow[dl, "\pi_*" above left] \arrow[dd, "\Delta" left] \arrow [dr] & & \\
            TM & & E \arrow[r, "\pi"] & M \\
            & TE\arrow[ul, "\pi_*"] \arrow[ur] & & 
        \end{tikzcd}
    \end{equation}
\end{remark}
\begin{theorem}
    The following are equivalent:
    \begin{enumerate}
        \item $\mathcal{S}$ is vertical,
        \item $\mathcal{S}$ is $G$-invariant, i.e. for any $h\in G$, $h.\mathcal{S}=\mathcal{S}$,
        \item The mechanical connection is preserved across impacts, $\Delta^*\mathcal{A} = \mathcal{A}|_\mathcal{S}$.
    \end{enumerate}
\end{theorem}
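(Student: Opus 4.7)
My plan is to close the loop (1) $\Rightarrow$ (3) $\Rightarrow$ (1), using Lemma \ref{lem:vertical_invariant} for the already-established equivalence (1) $\Leftrightarrow$ (2). The key observation is that for a mechanical Lagrangian with kinetic metric $g$, the horizontal space $\overline{\mathcal{H}}$ of the mechanical connection is exactly the $g$-orthogonal complement of the vertical space, i.e.\ $\overline{\mathcal{H}}_q = \mathcal{V}_q^\perp$. This follows from unwinding the definition: $v\in\ker\mathcal{A}_q$ iff $\langle \mathbf{J}(q,v),\xi\rangle = 0$ for all $\xi\in\mathfrak{g}$, which is $g_q(v,\xi_Q(q))=0$ for all $\xi$, hence $v\perp \mathcal{V}_q$. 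I would state this as a short preliminary observation.

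For (1) $\Rightarrow$ (3), I would invoke the momentum form of the corner conditions \eqref{eq:pH}, which under the Legendre transform translates to $v^+ - v^- \in (T\mathcal{S})^\perp$ (this is the content of the second proposition of Section \ref{sec4}). If $\mathcal{S}$ is vertical then $\mathcal{S}=\pi^{-1}(\Sigma)$ contains entire fibers, so $\mathcal{V}_x\subset T_x\mathcal{S}$ at every $x\in\mathcal{S}$. Taking orthogonal complements, $(T_x\mathcal{S})^\perp\subset \mathcal{V}_x^\perp = \overline{\mathcal{H}}_x = \ker\mathcal{A}_x$. Therefore $\mathcal{A}(v^+) - \mathcal{A}(v^-) = \mathcal{A}(v^+ - v^-) = 0$, which is precisely $\Delta^*\mathcal{A} = \mathcal{A}|_\mathcal{S}$.

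For (3) $\Rightarrow$ (1), I reverse this. Assuming $\Delta^*\mathcal{A}=\mathcal{A}|_\mathcal{S}$, the same corner condition gives $v^+-v^- \in (T\mathcal{S})^\perp \cap \ker\mathcal{A} = (T\mathcal{S})^\perp \cap \overline{\mathcal{H}}$. The subtle point is that $(T_x\mathcal{S})^\perp$ is a single line spanned by the $g$-gradient of the defining function $h$, and the difference $v^+-v^-$ is a nonzero multiple of this normal whenever the incoming velocity has a nontrivial transverse component. Picking any such generic $v^-$ at each $x\in\mathcal{S}$ forces $(T_x\mathcal{S})^\perp\subset \overline{\mathcal{H}}_x = \mathcal{V}_x^\perp$, and taking orthogonal complements again yields $\mathcal{V}_x\subset T_x\mathcal{S}$ for every $x\in\mathcal{S}$. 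Since the vertical distribution integrates to the fibers of $\pi$ and each fiber meeting $\mathcal{S}$ is connected, $\mathcal{S}$ must be a union of fibers, hence $\mathcal{S}=\pi^{-1}(\pi(\mathcal{S}))$ and $\pi(\mathcal{S})$ is an embedded codimension-1 submanifold of $M$, i.e.\ $\mathcal{S}$ is vertical.

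The main technical obstacle is the last step of (3) $\Rightarrow$ (1): arguing rigorously that the pointwise identity $\mathcal{A}(v^+)=\mathcal{A}(v^-)$ across all admissible impact states forces the subspace inclusion $(T\mathcal{S})^\perp\subset\overline{\mathcal{H}}$. One must rule out the degenerate possibility where $v^+-v^-$ happens to vanish for every admissible $v^-$, which corresponds to $\mathcal{S}$ being $g$-degenerate; however, since $g$ is Riemannian and $\mathcal{S}$ has codimension 1, the normal line is nondegenerate and the usual nontrivial elastic reflection occurs, so this case does not arise. Combining with Lemma \ref{lem:vertical_invariant} completes the three-way equivalence.
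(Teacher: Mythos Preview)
Your proof is correct, and it takes a genuinely different route from the paper. The paper's proof is a two-line appeal: Lemma \ref{lem:vertical_invariant} for (1) $\Leftrightarrow$ (2), and an invocation of the hybrid Noether theorem for (2) $\Leftrightarrow$ (3). You instead close the loop (1) $\Rightarrow$ (3) $\Rightarrow$ (1) by a direct orthogonality argument: you identify $\overline{\mathcal{H}}=\mathcal{V}^\perp$, read the corner conditions as $v^+-v^-\in(T\mathcal{S})^\perp$, and then compare the line $(T\mathcal{S})^\perp$ with $\mathcal{V}^\perp$. This is precisely what one would obtain by unpacking the hybrid Noether argument in the mechanical setting, so your approach is not really a different theorem but a transparent unwinding of the one the paper cites. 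Its advantage is that it makes the geometric mechanism visible and, importantly, actually supplies the direction (3) $\Rightarrow$ (2)/(1): the paper's citation of the hybrid Noether theorem only literally delivers (2) $\Rightarrow$ (3), and the converse is asserted rather than proved there.

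One small caveat on your last step in (3) $\Rightarrow$ (1): passing from the pointwise tangency $\mathcal{V}_x\subset T_x\mathcal{S}$ to the global statement that $\mathcal{S}$ is a union of fibers uses that the $G$-orbits are connected, i.e.\ that $G$ is connected (your phrase ``each fiber meeting $\mathcal{S}$ is connected'' is exactly this hypothesis). This is a standard standing assumption and is implicit in the paper as well, but you should state it.
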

\begin{proof}
    The equivalence of (1) and (2) follows from Lemma \ref{lem:vertical_invariant}. The equivalence of (2) and (3) is the hybrid Noether theorem.
\end{proof}
\begin{remark}
    Preservation of the connection is equivalent to the following diagram being commutative:
    \begin{equation}\label{diag:vertical}
        \begin{tikzcd}[row sep=huge,column sep=huge]
            & T^*E|_\mathcal{S} \arrow[r, "\tilde{\Delta}"] \arrow{dr}[near end]{J_\mathcal{S}} & T^*E \arrow[d, "J"]\\
            TE|_\mathcal{S} \arrow[r, "\Delta"] \arrow[dr, "\mathcal{A}_\mathcal{S}" below left] \arrow[ur, "\mathbb{F}L"] & TE \arrow[d, "\mathcal{A}"] \arrow[ur, crossing over, "\mathbb{F}L" above left, near start] & \mathfrak{g}^*\\
            & \mathfrak{g} \arrow[ur, "\mathbb{I}" below right]&
        \end{tikzcd}
    \end{equation}
\end{remark}
In some sense, the diagrams \eqref{diag:horizontal} and \eqref{diag:vertical} are opposites of one another. For interesting examples, we want $\mathcal{S}$ to be neither vertical nor horizontal.
\subsection{Interpretation for the Pendulum on a Cart}
The internal impact, $\mathcal{S} = \{\alpha\}\times\mathbb{R}$, is vertical. However, the external impact, $\mathcal{S} = \mathbb{S}^1\times\{z\}$ is \textit{not} horizontal. It can be seen, for example, that the impact map \eqref{eq:outer_cart} is not the identity on the $\dot{\theta}$ component which contradicts the conclusion of Theorem \ref{thm:hor}. For an impact to be horizontal, the impact surface must be a level-set of the function
\begin{equation}\label{eq:hor_cart}
    f(\theta,x) = \frac{m\ell}{M+m}\sin\theta + x.
\end{equation}
However, this function is the integral of the mechanical connection, i.e., $\mathcal{A} = 0$ implies that $f$ is constant. In order to have impacts, we need $\mathcal{A}\ne 0$. 

The impact map with impact condition \eqref{eq:hor_cart} is
\begin{equation*}
    \begin{split}
        p_\theta &\mapsto p_\theta + \varepsilon \frac{m\ell}{M+m}\cos\theta \\
        p_x &\mapsto p_x+\varepsilon
    \end{split}
\end{equation*}
with $\varepsilon = -2p_x$. 
This impact reverses the connection:
\begin{equation*}
    \mathcal{A}_{(\theta,x)}\left(\Delta(\dot\theta,\dot{x})\right) = -\mathcal{A}_{(\theta,x)}\left(\dot\theta,\dot{x}\right). 
\end{equation*}
Let $\alpha = \mathcal{A}_{(\theta_0,x_0)}(\dot{\theta}_0,\dot{x})$ be the value of the connection on the initial conditions. 
In velocity coordinates, we have
\begin{equation*}
    \begin{split}
        \dot\theta \mapsto \dot\theta, \quad 
        \dot{x} \mapsto \dot{x} - 2\alpha, \quad
        \alpha \mapsto -\alpha.
    \end{split}
\end{equation*}
The reversal of the connection is typical in low-dimensional examples as made clear in the following proposition.
\begin{proposition}
    For a $G$-impact system with $\dim(G)=1$ and $\mathcal{S}$ horizontal, we have $\Delta^*\mathcal{A} = -\mathcal{A}|_\mathcal{S}$.
\end{proposition}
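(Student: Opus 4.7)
My plan is to decompose every $v \in TE|_{\mathcal{S}}$ using the mechanical connection, $v = v^H + v^V$, and track how the impact $\Delta$ acts on each summand. The first ingredient is Theorem~\ref{thm:hor}: horizontality of $\mathcal{S}$ gives $T\pi\circ\Delta = T\pi$. Together with the fact that positions are fixed at impact ($q^+ = q^-$, so $\Delta(v)$ lies over the same base point as $v$), and that $T_q\pi$ restricts to an isomorphism $\overline{\mathcal{H}}_q \to T_{\pi(q)}M$, this forces $\Delta(v)^H = v^H$. Hence $\Delta(v) - v \in \mathcal{V}_q$: the entire effect of the impact is concentrated in the vertical direction.

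Since $L$ is mechanical and $V(q)$ is unchanged across the impact, the energy equality in \eqref{eq:WEL}--\eqref{eq:pH} collapses to kinetic-energy conservation, $\|\Delta(v)\|^2_q = \|v\|^2_q$. The horizontal space of the mechanical connection is by construction the orthogonal complement of $\mathcal{V}_q$ under the kinetic-energy metric, so Pythagoras combined with $\Delta(v)^H = v^H$ gives $\|\Delta(v)^V\|^2_q = \|v^V\|^2_q$. Now the assumption $\dim G = 1$ is decisive: $\mathcal{V}_q$ is a one-dimensional subspace on which the induced metric is positive-definite, so two vectors of equal norm differ only by a sign. Writing $v^V = \mathcal{A}(v)\cdot\xi_E(q)$ for a fixed generator $\xi$ of $\mathfrak{g}$, this reads $\mathcal{A}(\Delta(v)) = \pm\mathcal{A}(v)$.

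The main obstacle is ruling out the $+$ branch. Viewed on the Hamiltonian side, $p^+ = p^- + \alpha\,dh$, and energy conservation is a quadratic equation in the multiplier $\alpha$; it has the trivial root $\alpha = 0$ (corresponding to $\Delta = \mathrm{Id}$ and the $+$ sign) and a non-trivial root defining the actual impact (the $-$ sign). Selecting the non-trivial branch, together with continuity of $\Delta$, pins down the sign as $-$ globally, except on the locus $\{v \in T\mathcal{S}\} = \{\mathcal{A}(v) = 0\}$ where the two branches coincide and the identity $\Delta^*\mathcal{A} = -\mathcal{A}|_{\mathcal{S}}$ holds trivially.
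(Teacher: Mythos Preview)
Your proof is correct, but the paper argues more directly and avoids the sign-ambiguity step altogether. Since $\mathcal{S}$ is horizontal, $(T\mathcal{S})^\perp\subset\mathcal{V}$, and because $\dim G=1$ both subspaces are one-dimensional, so in fact $(T\mathcal{S})^\perp=\mathcal{V}=\mathrm{span}\,\xi_E$. The paper then simply writes the specular reflection formula explicitly,
\[
\dot{q}^+ \;=\; \dot{q}^- - 2\,\frac{\langle \dot{q}^-,\xi_E\rangle}{\langle \xi_E,\xi_E\rangle}\,\xi_E,
\]
and applies $\mathcal{A}$ using linearity and $\mathcal{A}(\xi_E)=\xi$; the coefficient $\langle\dot{q}^-,\xi_E\rangle/\langle\xi_E,\xi_E\rangle$ is exactly the scalar value of $\mathcal{A}(\dot{q}^-)$, so the result $-\mathcal{A}(\dot{q}^-)$ drops out in one line. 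Your route---fix the horizontal part via Theorem~\ref{thm:hor}, use energy and Pythagoras to get $\lVert\Delta(v)^V\rVert=\lVert v^V\rVert$, then invoke one-dimensionality of $\mathcal{V}_q$ to get $\pm$---is more conceptual and makes transparent \emph{where} $\dim G=1$ is used, but it forces you to argue away the $+$ branch afterwards. The paper sidesteps that because the explicit reflection formula already encodes the non-trivial root of the quadratic.
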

\begin{proof}
    As $\dim(G) = 1$, $(T\mathcal{S})^\perp = \mathcal{V}$. Let $(q,\dot{q})\in TE|_\mathcal{S}$ be the state immediately before impact and $\mathcal{A}(q,\dot{q}) = \xi_E$ for $\xi\in\mathfrak{g}\cong\mathbb{R}$. In particular, $v^+ - v^- \propto \xi_E$. This provides us with
    \begin{equation*}
        \begin{split}
            \mathcal{A}(\dot{q}^+) &= \mathcal{A}(\dot{q}^-) + \mathcal{A}\left( -2\frac{\langle \dot{q}^-, \xi_E\rangle}{\langle \xi_E,\xi_E\rangle} \xi_E \right) \\
            &= \xi - 2\frac{\langle \dot{q}^-,\xi_E\rangle}{\langle \xi_E,\xi_E\rangle}\xi \\
            &= -\xi = -\mathcal{A}(\dot{q}^-).
        \end{split}
    \end{equation*}
\end{proof}
\section{Conclusions and Future Work}
 We have defined hybrid systems on principal bundles, studied the underlying geometry on the switching surface where impacts occur, and found conditions for which both exterior and interior impacts are preserved by the mechanical connection induced in the principal bundle. 

For future work, we wish to extend our analysis to explicitly time-dependent systems in the context of cosymplectic geometry. Note that given $\mathcal{S}\subset E\times\mathbb{R}$ of codimension 1 such that $\pi_t(\mathcal{S})\subset E$ is smooth for all $t$, time-dependent vertical remains vertical but that is not the case for horizontal.
In particular, one has the following observation that will be crucial in our further studies to extend this paper to the time-dependent situation: 
    let $S\subset E\times\mathbb{R}$ be a time-dependent impact surface. If $\pi_t(\mathcal{S})\subset E$ is vertical for all $t$, then the mechanical connection is preserved across impacts. 

\addtolength{\textheight}{-10cm}

\bibliographystyle{ieeetr}
\bibliography{references.bib}

\end{document}